\documentclass{amsart}

\usepackage{amssymb}
\usepackage{graphicx}
\usepackage{url}

\newtheorem{theorem}{Theorem}[section]
\newtheorem{lemma}[theorem]{Lemma}
\newtheorem{proposition}[theorem]{Proposition}
\newtheorem{corollary}[theorem]{Corollary}

\theoremstyle{remark}
\newtheorem{remark}[theorem]{Remark}

\numberwithin{equation}{section}

\def\R{\mathbb{R}}

\def\eps{\varepsilon}
\def\dd{\mathop{}\!d}
\def\half{\tfrac12}
\def\norm#1{\lVert#1\rVert}
\def\abs#1{\lvert#1\rvert}
\def\Ltwo{L^{2}((0,e_0))}
\def\Hone{H^{1}((0,e_0))}

\begin{document}

\title{Least-time gradient flow}

\author[Alessandro Betti]{Alessandro Betti$^{1}$}
\author[Marco Gori]{Marco Gori$^{2}$}
\author[Stefano Melacci]{Stefano Melacci$^{2}$}
\author[Jinwei Zhao]{Jinwei Zhao$^{2,3}$}
\thanks{$^{1}$Department of Mathematics ``Tullio Levi-Civita'',
  University of Padua, Italy.}
\thanks{$^{2}$Department of Information Engineering and Mathematics,
  University of Siena, Italy.}
\thanks{$^{3}$University of Florence, Italy.}
\thanks{\emph{E-mail addresses}: \texttt{alessandro.betti@unipd.it} (A.~Betti),
  \texttt{marco.gori@unisi.it} (M.~Gori),
  \texttt{stefano.melacci@unisi.it} (S.~Melacci),
  \texttt{jinwei.zhao@unifi.it} (J.~Zhao).}

\date{\today}

\begin{abstract}
  Prescribing the speed of gradient flow on the risk itself, by the
  dynamics $\dot w=-u(E(w))\nabla E(w)/\abs{\nabla E(w)}^{2}$, makes the
  risk $e(t)=E(w(t))$ obey $\dot e=-u(e)$ exactly, whatever the
  landscape~$E$; the time needed to reach zero risk from $e_0$ is
  $\int_0^{e_0}\dd e/u(e)$. Minimizing this time alone is ill posed, and
  we study the regularized problem
  $\inf\{\int_0^{e_0}(\tfrac\lambda2\abs{u'}^{2}+1/u)\,\dd e:\
  u\in H^{1}(0,e_0),\ u\ge0,\ u(0)=0\}$, $\lambda>0$. We prove that the
  minimizer exists, is unique, and is a linearly scaled cycloid, and we
  show that the optimal rate behaves like
  $u^{*}(e)\sim(9/(2\lambda))^{1/3}e^{2/3}$ near zero risk: the exponent
  $2/3$ is the one found in \cite{betti2026holder} by a power-law
  ansatz, and it lies in the H\"older window $(\tfrac12,1)$ where the
  arrival is in finite time with vanishing weight speed. The proof
  follows the classical route: existence by the direct method,
  uniqueness by strict convexity, positivity of the minimizer away from
  the origin, and the explicit integration of the Euler--Lagrange
  equation.
\end{abstract}

\maketitle

\section{Introduction}

Gradient flow $\dot w(t)=-\nabla E(w(t))$ on the quadratic risk
$E(w)=\half\abs{w}^{2}$ makes the risk $e(t):=E(w(t))$ obey
$\dot e=-\abs{\nabla E(w)}^{2}=-2e$, so that $e(t)=e_0e^{-2t}$: the
risk never reaches zero. This is not a peculiarity of the quadratic
risk but a property of every risk with a locally Lipschitz gradient:
the trajectories of a Lipschitz vector field are unique in both time
directions \cite[Chapter~1, Theorem~2.2]{coddington1955theory}, and
since a critical point of $E$ is itself a trajectory, no other
trajectory can reach it in finite time. A direct cure is to divide the
field by $\abs{\nabla E}^{2}$: the dynamics
$\dot w=-\nabla E/\abs{\nabla E}^{2}$ gives $\dot e=-1$ and
$e(t)=e_0-t$, zero at $t=e_0$ on every landscape. But on the quadratic
risk this dynamics is $\dot w=-w/\abs{w}^{2}$, so that
$\abs{w(t)}=\sqrt{\abs{w_0}^{2}-2t}$ and
\[
  \abs{\dot w(t)}=\frac1{\abs{w(t)}}=\frac1{\sqrt{2\,(e_0-t)}}
  \ \longrightarrow\ +\infty\qquad\hbox{as }t\to e_0\,:
\]
the risk reaches zero in finite time, but the weights arrive at the
minimum with infinite speed. Both dynamics belong to the family
\begin{equation}\label{eq:dynamics}
  \dot w(t)=-\,\frac{u\bigl(E(w(t))\bigr)}{\abs{\nabla E(w(t))}^{2}}\,
  \nabla E(w(t))\,,
\end{equation}
studied in~\cite{betti2026holder}, in which the speed is set by a
function $u\colon[0,+\infty)\to[0,+\infty)$ of the current risk; we call
$u$ the \emph{dissipation schedule}. The two examples above are $u\equiv1$
and, on the quadratic risk, $u(e)=2e$; the choice $u(e)=e$ is Polyak's
step \cite{polyak1987} with optimal value zero. The dynamics is defined
wherever $\nabla E\ne0$, which in the interpolation regime of
overparameterized networks is exactly where $E>0$ (this standing
assumption, $\nabla E=0\iff E=0$, and the well-posedness of
\eqref{eq:dynamics} are discussed in~\cite{betti2026holder}; the present
note is about the schedule only). What makes \eqref{eq:dynamics}
particularly interesting is the following relation, a consequence of
the chain rule:
\begin{equation}\label{eq:reduction}
  \dot e=\langle\nabla E,\dot w\rangle
  =-\,\frac{u(e)}{\abs{\nabla E}^{2}}\,\langle\nabla E,\nabla E\rangle
  =-\,u(e)\,,
\end{equation}
\emph{on every landscape}. The geometry of $E$ decides the path of the
weights; $u$ decides only the clock along it, and the clock is explicit:
separating the variables in $\dot e=-u(e)$, the time needed to bring the
risk from $e_0$ down to zero is
\begin{equation}\label{eq:time}
  T(u)=\int_0^{e_0}\frac{\dd e}{u(e)}\,.
\end{equation}
It is finite for $u\equiv1$ ($T=e_0$) and infinite for $u(e)=e$ (the
exponential decay of gradient flow and of Polyak's step); in the language
of \cite{zak1988,zak1989,bianchini1997}, $T(u)<\infty$ means that the
minimum is a \emph{terminal attractor} of the risk dynamics, and the
criterion ``$\int\dd e/u<\infty$'' is the one-dimensional case of the
finite-time stability theory of \cite{bhat2000}; see also
\cite{cortes2006} for finite-time gradient flows.

The schedule of least time does not exist: constants $u\equiv c$ give
$T=e_0/c$, which tends to zero as $c\to\infty$, and since
$\abs{\dot w}=c/\abs{\nabla E}$, every such schedule arrives with
infinite speed. What we want is the fastest schedule that lands
\emph{softly}, and the way to ask for it is to require $u(0)=0$, so that
the rate itself vanishes at arrival, and to pay for roughness: we
minimize
\begin{equation}\label{eq:F}
  F(u)=\int_0^{e_0}\Bigl(\frac\lambda2\,\abs{u'(e)}^{2}+\frac1{u(e)}\Bigr)\dd e\,,
  \qquad\lambda>0\,,
\end{equation}
over the class
\begin{equation}\label{eq:X}
  X=\bigl\{u\in\Hone:\ u\ge0\ \hbox{on }[0,e_0],\ u(0)=0\bigr\}\,.
\end{equation}
Functions in $\Hone$ are identified with their continuous
representatives on $[0,e_0]$, so that the pointwise conditions in
\eqref{eq:X} make sense. The potential term in \eqref{eq:F} is the
Lebesgue integral of a nonnegative function: we set $F(u)=+\infty$ when
$u$ vanishes on a set of positive measure, or when $1/u$, then defined
almost everywhere, is not integrable; otherwise $F(u)$ is given by
\eqref{eq:F}. In particular
\begin{equation}\label{eq:finite-energy}
  F(u)<+\infty\quad\Longrightarrow\quad\abs{\{u=0\}}=0\,.
\end{equation}
The second term of $F$ is the time \eqref{eq:time}; the first penalizes
abrupt changes of the rate, and $\lambda$ sets the price of roughness.
(The functional of \cite{betti2026holder} carries two weights
$\lambda_1,\lambda_2$ and an amplitude term; a single parameter suffices,
and it can even be scaled away, Remark~\ref{rem:scaling}.) The purpose
of this note is to solve this problem completely. Its answer was
announced in \cite{betti2026holder}, where the exponent $2/3$ was
obtained from the Euler--Lagrange equation by a power-law ansatz; here
is the full statement.

\begin{theorem}\label{thm:main}
  Let $e_0>0$ and $\lambda>0$. The functional $F$ of \eqref{eq:F} has
  exactly one minimizer $u^{*}$ on $X$, and $u^{*}$ is the function
  of~$e$ defined parametrically by
  \begin{equation}\label{eq:cycloid}
    e(\theta)=\frac{2e_0}{\pi}\,(\theta-\sin\theta\cos\theta)\,,\quad
    u^{*}(\theta)=\lambda^{-1/3}\Bigl(\frac{2\sqrt2\,e_0}{\pi}\Bigr)^{\!2/3}
    \sin^{2}\theta\,,
    \quad 0\le\theta\le\frac\pi2\,,
  \end{equation}
  a linearly scaled cycloid. It is continuous and strictly increasing on
  $[0,e_0]$, of class $C^{\infty}$ on $(0,e_0]$, and it solves
  \begin{equation}\label{eq:EL}
    \lambda\,u''(e)=-\,\frac1{u(e)^{2}}\quad\hbox{for every }e\in(0,e_0]\,,
    \qquad u(0)=0\,,\qquad u'(e_0)=0\,.
  \end{equation}
\end{theorem}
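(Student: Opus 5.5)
We follow the classical route announced in the abstract: direct method, strict convexity, positivity, and integration of the Euler--Lagrange equation.

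\emph{Existence.} The candidate $u(e)=e^{2/3}$ lies in $X$ and has $F<\infty$, so $m:=\inf_X F$ is finite. Take a minimizing sequence $(u_n)$. The bound $\frac\lambda2\norm{u_n'}_{L^2}^2\le F(u_n)$ together with $u_n(0)=0$ (Poincar\'e) bounds $u_n$ in $\Hone$. A subsequence then converges weakly in $\Hone$ and uniformly on $[0,e_0]$ (compact embedding into $C^0$), so the limit $u$ inherits $u\ge0$ and $u(0)=0$. The Dirichlet term is weakly lower semicontinuous. For the potential term, $1/u_n\to 1/u$ pointwise in $[0,+\infty]$ (with $1/0=+\infty$), and Fatou gives $\int 1/u\le\liminf\int 1/u_n$. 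Hence $F(u)\le m$, and $u$ is a minimizer.

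\emph{Uniqueness.} $X$ is convex, $v\mapsto 1/v$ is strictly convex on $(0,\infty)$, and $\abs{u'}^2$ is convex. Let $u_1,u_2$ be two minimizers. By \eqref{eq:finite-energy} both are positive a.e., so $F(\tfrac{u_1+u_2}2)<m$ unless $u_1=u_2$ a.e., hence everywhere by continuity.

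\emph{Positivity and the Euler--Lagrange equation.} First I show $u^*>0$ on $(0,e_0]$. Suppose $u^*(e_1)=0$ for some $e_1>0$. Since $u^*\in C^{0,1/2}$, we have $u^*(e)\le C\abs{e-e_1}^{1/2}$, so $1/u^*$ is not integrable near $e_1$. This contradicts $F(u^*)<\infty$.

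On $(0,e_0]$ the minimizer is therefore locally bounded below. Variations $u^*+t\varphi$ with $\varphi\in C_c^\infty((0,e_0])$, or more generally $\varphi\in H^1$ supported away from $0$, are admissible for small $\abs t$. Differentiating gives
\[
\lambda\int u^{*\prime}\varphi'=\int\varphi/u^{*2}.
\]
Standard bootstrapping then shows that $u^*\in C^\infty((0,e_0])$ and solves $\lambda u''=-1/u^2$. Since the endpoint $e_0$ is free, we get the natural boundary condition $u'(e_0)=0$.

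\emph{Integration of the Euler--Lagrange equation.} This step identifies the cycloid. Because $u''<0$ and $u'(e_0)=0$, we have $u'>0$ on $(0,e_0)$, so $u$ is strictly increasing; continuity at $0$ is inherited from $H^1$. Multiply the equation by $u'$ to obtain the first integral
\[
\tfrac\lambda2 u'^2=\frac1u-\frac1M,\qquad M:=u(e_0).
\]
Substitute $u=M\sin^2\theta$. Then
\[
u'=\sqrt{2/(\lambda M)}\,\cot\theta,
\]
and $du/de=2M\sin\theta\cos\theta\,\theta'$, so
\[
de/d\theta=\sqrt{\lambda M^3/2}\cdot 2\sin^2\theta=\sqrt{\lambda M^3/2}\,(1-\cos2\theta).
\]
Integrating from $e(0)=0$ gives $e=\sqrt{\lambda M^3/2}\,(\theta-\sin\theta\cos\theta)$. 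The condition $e(\pi/2)=e_0$ yields $\sqrt{\lambda M^3/2}\,\pi/2=e_0$, that is,
\[
M=\lambda^{-1/3}\bigl(2\sqrt2e_0/\pi\bigr)^{2/3},
\]
which matches \eqref{eq:cycloid}. The range $\theta\in[0,\pi/2]$ is forced by monotonicity of $u$ and by $u'(e_0)=0$, which corresponds to $\cot\theta=0$.

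\emph{Main obstacle.} The delicate point is justifying the Euler--Lagrange equation, and in particular the natural boundary condition at $e_0$. This requires the positivity argument above, which uses that finite energy excludes interior zeros via $H^1\subset C^{0,1/2}$. Only once $u^*$ is known to be bounded below near $e_0$ is the free-endpoint variation legitimate. Near $e=0$ no boundary condition beyond $u(0)=0$ is needed, since the test functions vanish there.
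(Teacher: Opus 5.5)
Your positivity step is wrong, and everything after it depends on it. From $u^{*}(e_1)=0$ and the H\"older bound you get $1/u^{*}(e)\ge C^{-1}\abs{e-e_1}^{-1/2}$. But $\abs{e-e_1}^{-1/2}$ \emph{is} integrable near $e_1$, so no contradiction follows.

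No sharpening can save the idea, because finite energy does not exclude interior zeros at all. For $0<\bar e<e_0$, the function $u(e)=\min\{e^{2/3},\abs{e-\bar e}^{2/3}\}$ lies in $X$. Almost everywhere it satisfies $\abs{u'}^{2}\le\frac49\bigl(e^{-2/3}+\abs{e-\bar e}^{-2/3}\bigr)$ and $1/u\le e^{-2/3}+\abs{e-\bar e}^{-2/3}$, so $F(u)<+\infty$, yet $u(\bar e)=0$. The paper points this out at the start of its positivity subsection.

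Without positivity, three things break:
\begin{itemize}
\item you only get $\lambda u''=-1/u^{2}$ on the components of $\{u^{*}>0\}$;
\item the variation at $e_0$, and hence $u'(e_0)=0$, is not justified if $u^{*}(e_0)=0$;
\item the first integral with a single constant on all of $(0,e_0]$, which your cycloid computation needs, is unavailable.
\end{itemize}

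Positivity has to come from minimality, by a truncation. Suppose $u^{*}(e_1)=0$ for some $e_1\in(0,e_0]$. Since $\abs{\{u^{*}=0\}}=0$, there is $x_1<e_1$ with $\delta:=u^{*}(x_1)>0$. Let $(\alpha,\beta)$ be the component of $\{x\in[0,e_0]:u^{*}(x)<\delta\}$ containing $e_1$. Then $\alpha\ge x_1>0$, $u^{*}(\alpha)=\delta$, and $u^{*}(\beta)=\delta$ if $\beta<e_0$. Replace $u^{*}$ by the constant $\delta$ on $[\alpha,\beta]$. The new function $v$ is in $X$, and $F(u^{*})-F(v)=\int_\alpha^\beta\bigl(\frac\lambda2\abs{u^{*\prime}}^{2}+1/u^{*}-1/\delta\bigr)\dd e>0$, contradicting minimality.

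The paper runs the same truncation at any descent $u^{*}(x_2)<u^{*}(x_1)$, $x_1<x_2$, to show the minimizer is nondecreasing. It then gets positivity on $(0,e_0]$ because a zero at $x_0>0$ would force $u^{*}\equiv0$ on $[0,x_0]$.

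The remaining steps of your proposal are fine. In particular, your lower semicontinuity argument is correct: you apply Fatou to $[0,+\infty]$-valued functions with $1/0=+\infty$, using that $u_n(x)\to0$ forces $1/u_n(x)\to+\infty$. That makes the paper's Lemma~\ref{lem:null-zero-set} unnecessary.
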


\begin{remark}[on the parametrization]\label{rem:param}
  The map $\theta\mapsto e(\theta)$ in \eqref{eq:cycloid} is a strictly
  increasing bijection of $[0,\pi/2]$ onto $[0,e_0]$
  (Proposition~\ref{prop:cycloid}); so \eqref{eq:cycloid} defines $u^{*}$
  as a function on $[0,e_0]$, the value $u^{*}(e)$ being the second
  expression evaluated at the unique $\theta=\theta(e)$ with
  $e(\theta)=e$. To keep the two roles of the letters apart, in the
  proofs we write $\varepsilon(\theta)$ and $U(\theta)$ for the two
  functions of $\theta$ in \eqref{eq:cycloid}, so that
  $u^{*}=U\circ\varepsilon^{-1}$ and $\theta(e)=\varepsilon^{-1}(e)$.
\end{remark}

\begin{corollary}[the cusp]\label{cor:cusp}
  $u^{*}(e)=(9/(2\lambda))^{1/3}\,e^{2/3}\,\bigl(1+O(e^{2/3})\bigr)$ as
  $e\to0$.
\end{corollary}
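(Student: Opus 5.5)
The plan is to read the asymptotics straight off the parametrization \eqref{eq:cycloid} of Theorem~\ref{thm:main}. By Remark~\ref{rem:param}, $u^{*}(e)=U(\theta(e))$ with $\theta(e)=\varepsilon^{-1}(e)$, and $\theta(e)\to0$ as $e\to0$ because $\varepsilon$ is a continuous increasing bijection with $\varepsilon(0)=0$. So it suffices to expand $\varepsilon$ and $U$ at $\theta=0$, invert the first expansion, and substitute into the second.

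\emph{Step 1: expand both functions.} Write $\theta-\sin\theta\cos\theta=\theta-\tfrac12\sin2\theta$ and use the Taylor series of $\sin$:
\[
  \theta-\tfrac12\sin 2\theta=\tfrac23\theta^{3}-\tfrac2{15}\theta^{5}+\cdots=\tfrac23\theta^{3}\bigl(1+O(\theta^{2})\bigr).
\]
This gives
\[
  \varepsilon(\theta)=\frac{4e_0}{3\pi}\,\theta^{3}\bigl(1+O(\theta^{2})\bigr).
\]
Likewise $\sin^{2}\theta=\theta^{2}(1+O(\theta^{2}))$, so
\[
  U(\theta)=c\,\theta^{2}\bigl(1+O(\theta^{2})\bigr),\qquad c=\lambda^{-1/3}\bigl(2\sqrt2\,e_0/\pi\bigr)^{2/3}.
\]

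\emph{Step 2: invert.} Take cube roots in the expansion of $\varepsilon$. Since $(1+O(\theta^2))^{1/3}=1+O(\theta^2)$, we get
\[
  \theta(e)=\Bigl(\frac{3\pi e}{4e_0}\Bigr)^{1/3}\bigl(1+O(\theta^{2})\bigr).
\]
In particular $\theta(e)^{2}=O(e^{2/3})$, so every $O(\theta^{2})$ is an $O(e^{2/3})$. This is the only step needing a little care: the error term depends on $\theta$, which is itself the unknown. It is harmless because $\theta\to0$ and the relation $\theta^3\asymp e$ is established first; this is the point I would write out explicitly.

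\emph{Step 3: substitute and simplify the constant.} Inserting Step~2 into the expansion of $U$ gives
\[
  u^{*}(e)=c\Bigl(\frac{3\pi}{4e_0}\Bigr)^{2/3}e^{2/3}\bigl(1+O(e^{2/3})\bigr).
\]
The constant is
\[
  \lambda^{-1/3}\Bigl(\frac{2\sqrt2\,e_0}{\pi}\cdot\frac{3\pi}{4e_0}\Bigr)^{2/3}
  =\lambda^{-1/3}\bigl(\tfrac{3}{\sqrt2}\bigr)^{2/3}
  =\lambda^{-1/3}\bigl(\tfrac92\bigr)^{1/3},
\]
which is $(9/(2\lambda))^{1/3}$, as claimed.

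As a consistency check, the local power law $u\approx Ae^{2/3}$ solves $\lambda u''=-u^{-2}$ to leading order. Indeed $u''=-\tfrac29Ae^{-4/3}$ and $u^{-2}=A^{-2}e^{-4/3}$, so the equation forces $\tfrac29\lambda A^{3}=1$, i.e.\ $A^{3}=9/(2\lambda)$. This agrees with Step~3 and with the ansatz of \cite{betti2026holder}.
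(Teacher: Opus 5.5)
Your proposal is correct and follows essentially the same route as the paper's proof of Proposition~\ref{prop:cycloid}(c), from which the corollary is read off: expand $\varepsilon(\theta)=\frac{4e_0}{3\pi}\theta^{3}(1+O(\theta^{2}))$ and $U(\theta)=A\theta^{2}(1+O(\theta^{2}))$, invert the cubic relation, substitute, and simplify the constant to $(9/(2\lambda))^{1/3}$. Your closing consistency check is the paper's Remark~\ref{rem:power-law}. Two harmless slips: in inverting you actually need $(1+O(\theta^{2}))^{-1/3}=1+O(\theta^{2})$, and your letter $A$ there clashes with the paper's $A=u^{*}(e_0)$.
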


\begin{corollary}[minimum value and arrival time]\label{cor:values}
  \[
    F(u^{*})=\frac32\,\lambda^{1/3}(\pi^{2}e_0)^{1/3}\,,\qquad
    T(u^{*})=\lambda^{1/3}(\pi^{2}e_0)^{1/3}\,,
  \]
  so that the penalty $\frac\lambda2\int\abs{u^{*\prime}}^{2}$ is exactly
  half of the time.
\end{corollary}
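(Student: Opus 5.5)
The plan is to compute $T(u^{*})$ directly from the parametrization \eqref{eq:cycloid}, and then to obtain the relation between the penalty and the time from the Euler--Lagrange equation \eqref{eq:EL} by an integration by parts. This avoids computing $\int\abs{u^{*\prime}}^{2}$ in closed form.

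\emph{Arrival time.} I write $u^{*}=U\circ\varepsilon^{-1}$ as in Remark~\ref{rem:param}. Since $\varepsilon$ is a strictly increasing bijection of $[0,\pi/2]$ onto $[0,e_0]$, I substitute $e=\varepsilon(\theta)$ in \eqref{eq:time}. The derivative is $\varepsilon'(\theta)=\frac{2e_0}{\pi}(1-\cos2\theta)=\frac{4e_0}{\pi}\sin^{2}\theta$. Writing $U(\theta)=c\sin^{2}\theta$ with $c=\lambda^{-1/3}(2\sqrt2\,e_0/\pi)^{2/3}$, the integrand $\varepsilon'/U$ is the constant $4e_0/(\pi c)$. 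This makes the singularity at $\theta=0$ harmless, and gives $T(u^{*})=\frac{4e_0}{\pi c}\cdot\frac\pi2=\frac{2e_0}{c}$. Since $(2\sqrt2)^{2/3}=2$, we have $c=2\lambda^{-1/3}e_0^{2/3}\pi^{-2/3}$, hence $T(u^{*})=\lambda^{1/3}\pi^{2/3}e_0^{1/3}$, as claimed.

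\emph{Penalty equals half the time.} On $[\delta,e_0]$ with $\delta>0$, $u^{*}$ is $C^{\infty}$ by Theorem~\ref{thm:main}. I multiply \eqref{eq:EL} by $u^{*}$ and integrate by parts:
\[
  \lambda\bigl[u^{*}u^{*\prime}\bigr]_{\delta}^{e_0}-\lambda\int_\delta^{e_0}\abs{u^{*\prime}}^{2}\dd e
  =-\int_\delta^{e_0}\frac{\dd e}{u^{*}}\,.
\]
The boundary term at $e_0$ vanishes because $u^{*\prime}(e_0)=0$.

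\emph{The boundary term at $\delta$ (the main obstacle).} I need $u^{*}(\delta)u^{*\prime}(\delta)\to0$ as $\delta\to0$. I do not want to differentiate the asymptotics of Corollary~\ref{cor:cusp}. Instead I use the first integral: multiplying \eqref{eq:EL} by $u^{*\prime}$ shows that $\frac\lambda2\abs{u^{*\prime}}^{2}-1/u^{*}$ is constant on $(0,e_0]$. Evaluating at $e_0$ gives the constant $-1/M$ with $M=u^{*}(e_0)>0$. Hence $\abs{u^{*\prime}}^{2}\le\frac2\lambda\cdot\frac1{u^{*}}$, so $\abs{u^{*}u^{*\prime}}\le\sqrt{2u^{*}/\lambda}\to0$, because $u^{*}$ is continuous with $u^{*}(0)=0$. (Alternatively, $u^{*}u^{*\prime}=\frac{2e_0}{\pi}\,U\,\frac{U'}{\varepsilon'}$ in $\theta$, which is $O(\sin^{2}\theta)$.)

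\emph{Conclusion.} Both integrals in the identity are monotone in $\delta$, and the right-hand side tends to the finite limit $-T(u^{*})$. Letting $\delta\to0$ therefore gives $\lambda\int_0^{e_0}\abs{u^{*\prime}}^{2}=T(u^{*})$. So the penalty $\frac\lambda2\int\abs{u^{*\prime}}^{2}$ is $\frac12T(u^{*})$, and $F(u^{*})=\frac32T(u^{*})=\frac32\lambda^{1/3}(\pi^{2}e_0)^{1/3}$.
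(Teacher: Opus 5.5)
Your proof is correct, but for the penalty it takes a different route from the paper.

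\textbf{The paper's route.} The paper reads the corollary off Proposition~\ref{prop:cycloid}(b), where \emph{both} integrals are computed by the substitution $e=\varepsilon(\theta)$. With $u^{*\prime}=\kappa\cot\theta$ and $\lambda\kappa^{2}=2/A$, the penalty is $\frac1A\cdot\frac{4e_0}{\pi}\int_0^{\pi/2}\cos^{2}\theta\,\dd\theta=e_0/A$. The time is $\frac1A\cdot\frac{4e_0}{\pi}\int_0^{\pi/2}\dd\theta=2e_0/A$. So the factor $\half$ appears as the ratio of two trigonometric integrals.

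\textbf{Your route.} You compute only the time this way. You get the penalty from the identity $\lambda\int\abs{u^{*\prime}}^{2}=\int\dd e/u^{*}$, obtained by testing \eqref{eq:EL} against $u^{*}$ itself. The boundary term at the cusp is correctly disposed of by the first-integral bound $\abs{u^{*}u^{*\prime}}\le\sqrt{2u^{*}/\lambda}$.

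\textbf{What your route buys.} It explains the $\half$. Testing with $u^{*}$ is the dilation $t\mapsto F(tu^{*})=t^{2}D+T/t$, with $D=\frac\lambda2\int\abs{u^{*\prime}}^{2}$. Stationarity at $t=1$ is $2D=T$, which follows from the homogeneities $2$ and $-1$ of the two terms, not from the cycloid. In fact, since $X$ is a cone and $F(u^{*})<+\infty$, the minimality of $u^{*}$ along $t\mapsto tu^{*}$ gives $2D=T$ with no boundary analysis at the cusp at all.

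\textbf{What the paper's route buys.} The direct computation does not use the Euler--Lagrange equation. This matters because Proposition~\ref{prop:cycloid}(b) is needed \emph{before} Theorem~\ref{thm:main} is proved, to know that $u^{*}\in X$ and $\inf_XF<+\infty$. For the corollary, which comes after the theorem, your appeal to the theorem is legitimate. Your route could also be made independent of the theorem by checking \eqref{eq:EL} for $u^{*}$ by hand from $u^{*\prime}=\kappa\cot\theta$.

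\textbf{A harmless slip.} In your parenthetical alternative, $u^{*}u^{*\prime}=UU'/\varepsilon'=A\kappa\sin\theta\cos\theta$. This is $O(\sin\theta)$, not $O(\sin^{2}\theta)$, and carries no factor $2e_0/\pi$. It still tends to $0$, so nothing is affected.
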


\begin{figure}[b]
  \hbox to\hsize{\hfil\includegraphics{figs/uopt-1.mps}\hfil
    \includegraphics{figs/uopt-2.mps}\hfil}
  \caption{The minimizer \eqref{eq:cycloid} for $\lambda=1$ and $e_0=1$,
    with its maximum $A=(2\sqrt2\,e_0/\pi)^{2/3}$ marked (left), and the
    family for different initializations $e_0=\frac12,1,2$ (right).}
  \label{fig:cycloid}
\end{figure}

Figure~\ref{fig:cycloid} shows the minimizer. The plan is the following.
Section~\ref{sec:cycloid} derives the candidate \eqref{eq:cycloid} from
the Euler--Lagrange equation, as one does in the indirect method, and
verifies that it is an admissible competitor of finite energy.
Section~\ref{sec:proof} proves the theorem along the classical route:
existence by the direct method, uniqueness by strict convexity, then
positivity of the minimizer away from the origin, which makes the
Euler--Lagrange equation available, and finally its explicit
integration, which identifies the minimizer with \eqref{eq:cycloid}.
Section~\ref{sec:holder} checks that the minimizer is consistent with
the H\"older regime of \cite{betti2026holder}: it lands inside the
window $\tfrac12<\alpha<1$, at its exponent $2/3$.

\section{The extremal: a cycloid}\label{sec:cycloid}

Let us compute, in the spirit of the indirect method of the calculus of
variations, the Euler--Lagrange equation of $F$. For a smooth positive
$u$ and a smooth $\varphi$ with $\varphi(0)=0$, differentiating
$F(u+t\varphi)$ at $t=0$ and integrating by parts,
\[
  \frac{\dd}{\dd t}F(u+t\varphi)\Big|_{t=0}
  =\int_0^{e_0}\Bigl(\lambda u'\varphi'-\frac{\varphi}{u^{2}}\Bigr)\dd e
  =\lambda u'(e_0)\varphi(e_0)
  -\int_0^{e_0}\Bigl(\lambda u''+\frac1{u^{2}}\Bigr)\varphi\,\dd e\,.
\]
Since $\varphi(e_0)$ is free, a minimizer should satisfy the
Euler--Lagrange equation together with the natural boundary condition
at $e_0$, that is \eqref{eq:EL}
\cite[Chapter~1, Sections~2.2 and~2.4]{giaquinta1996}, and we are
interested in positive solutions. Multiplying $\lambda u''=-1/u^{2}$ by
$u'$ we get $\bigl(\frac\lambda2u'^{2}-1/u\bigr)'=0$, and the boundary
condition at $e_0$ fixes the constant:
\begin{equation}\label{eq:first-integral}
  \frac\lambda2\,u'^{2}=\frac1u-\frac1A\,,\qquad A:=u(e_0)\,.
\end{equation}
As $u''<0$, the derivative $u'$ is strictly decreasing, and $u'(e_0)=0$
forces $u'>0$ on $(0,e_0)$; hence $u'=\sqrt{(2/\lambda)(1/u-1/A)}$, a
separable equation, whose solution in parametric form is
\eqref{eq:cycloid}: the computation is carried out in
Section~\ref{sub:identification}, where it identifies the minimizer.
The curve is Johann Bernoulli's brachistochrone \cite{bernoulli1696}:
after the change of scale $e=\sqrt{\lambda A/2}\,x$, equation
\eqref{eq:first-integral} becomes $(\dd u/\dd x)^{2}=(A-u)/u$, the first
integral $y(1+y'^{2})=\hbox{const}$ of the bead sliding along a curve
$y(x)$ under gravity, whose solutions are the cycloids.

So much for the heuristics. What the proof will need of
\eqref{eq:cycloid} is that it defines an admissible competitor of finite
energy, its energy, and its behaviour at the cusp.

\begin{proposition}\label{prop:cycloid}
  Let $A=\lambda^{-1/3}(2\sqrt2\,e_0/\pi)^{2/3}$ and, for
  $0\le\theta\le\pi/2$, let
  \[
    \varepsilon(\theta)=\frac{2e_0}\pi\,(\theta-\sin\theta\cos\theta)\,,
    \qquad U(\theta)=A\sin^{2}\theta
  \]
  be the two functions in \eqref{eq:cycloid}. Then $\varepsilon$ is a
  $C^{\infty}$ strictly increasing bijection of $[0,\pi/2]$ onto
  $[0,e_0]$, its inverse $\theta(e):=\varepsilon^{-1}(e)$ is $C^{\infty}$
  on $(0,e_0]$, and $u^{*}=U\circ\varepsilon^{-1}$ has the following
  properties.
  \begin{itemize}
  \item[\textup{(a)}] $u^{*}\in C([0,e_0])\cap C^{\infty}((0,e_0])$,
    $u^{*}(0)=0$, $u^{*}(e_0)=A$, $u^{*}$ is strictly increasing, and
    $u^{*\prime}(e)=\kappa\cot\theta(e)$ on $(0,e_0]$, where
    $\kappa:=\pi A/(2e_0)$ satisfies $\kappa^{2}=2/(\lambda A)$.
  \item[\textup{(b)}] $u^{*}\in X$, and
    \[
      \frac\lambda2\int_0^{e_0}\abs{u^{*\prime}}^{2}\,\dd e=\frac{e_0}A\,,\qquad
      \int_0^{e_0}\frac{\dd e}{u^{*}}=\frac{2e_0}A\,,\qquad
      F(u^{*})=\frac{3e_0}A=\frac32\,\lambda^{1/3}(\pi^{2}e_0)^{1/3}\,.
    \]
  \item[\textup{(c)}] $u^{*}(e)=(9/(2\lambda))^{1/3}e^{2/3}\,(1+O(e^{2/3}))$
    as $e\to0$.
  \end{itemize}
\end{proposition}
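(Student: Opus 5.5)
The proof is a direct computation in the parameter $\theta$. First, $\varepsilon'(\theta)=\frac{2e_0}{\pi}(1-\cos^2\theta+\sin^2\theta)=\frac{4e_0}{\pi}\sin^2\theta$. This is positive on $(0,\pi/2]$ and vanishes only at $\theta=0$, so $\varepsilon$ is a $C^\infty$ strictly increasing bijection of $[0,\pi/2]$ onto $[\varepsilon(0),\varepsilon(\pi/2)]=[0,e_0]$. Its inverse is continuous on $[0,e_0]$, and by the inverse function theorem it is $C^\infty$ on $(0,e_0]$, where $\varepsilon'\ne0$. Hence $u^*=U\circ\varepsilon^{-1}$ is continuous on $[0,e_0]$ and $C^\infty$ on $(0,e_0]$, with $u^*(0)=U(0)=0$ and $u^*(e_0)=U(\pi/2)=A$. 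Since $U$ is strictly increasing on $[0,\pi/2]$, $u^*$ is strictly increasing. The chain rule gives
\[
u^{*\prime}=\frac{U'(\theta)}{\varepsilon'(\theta)}=\frac{2A\sin\theta\cos\theta}{(4e_0/\pi)\sin^2\theta}=\frac{\pi A}{2e_0}\cot\theta=\kappa\cot\theta .
\]
The identity $\kappa^2=2/(\lambda A)$ is equivalent to $\pi^2A^3\lambda=8e_0^2$. This holds because $A^3=\lambda^{-1}(2\sqrt2e_0/\pi)^2=8e_0^2/(\pi^2\lambda)$. That proves (a).

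For (b), both integrals are computed by the change of variables $e=\varepsilon(\theta)$. It is legitimate on $(0,\pi/2]$, where $\varepsilon$ is a $C^1$ diffeomorphism, and one lets the lower limit tend to $0$ by monotone convergence, since both integrands are nonnegative. For the Dirichlet term,
\[
\int_0^{e_0}\abs{u^{*\prime}}^2\dd e=\int_0^{\pi/2}\kappa^2\cot^2\theta\,\frac{4e_0}{\pi}\sin^2\theta\dd\theta=\kappa^2\frac{4e_0}{\pi}\cdot\frac\pi4=\kappa^2e_0 .
\]
Multiplying by $\lambda/2$ gives $e_0/A$. For the potential term,
\[
\int_0^{e_0}\frac{\dd e}{u^*}=\int_0^{\pi/2}\frac{(4e_0/\pi)\sin^2\theta}{A\sin^2\theta}\dd\theta=\frac{2e_0}{A} .
\]
The derivative $u^{*\prime}$ is square integrable by the first computation. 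The function $u^*$ is continuous and belongs to $C^1((0,e_0])$, so it is absolutely continuous on each $[\delta,e_0]$. Letting $\delta\to0$ and using continuity at $0$ shows that its classical derivative is its weak derivative on $(0,e_0)$, so $u^*\in\Hone$. Together with $u^*\ge0$ and $u^*(0)=0$, this gives $u^*\in X$. Then $F(u^*)=3e_0/A$, and substituting $A$ gives $3e_0\lambda^{1/3}(\pi/(2\sqrt2e_0))^{2/3}=\frac32\lambda^{1/3}(\pi^2e_0)^{1/3}$.

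For (c), expand near $\theta=0$. Since $\theta-\sin\theta\cos\theta=\theta-\frac12\sin2\theta$, we get $\varepsilon(\theta)=\frac{2e_0}{\pi}\bigl(\frac23\theta^3+O(\theta^5)\bigr)=\frac{4e_0}{3\pi}\theta^3(1+O(\theta^2))$. Inverting, $\theta=(3\pi e/(4e_0))^{1/3}(1+O(\theta^2))$, and $\theta^2=O(e^{2/3})$. Also $U=A\theta^2(1+O(\theta^2))$, so
\[
u^*(e)=A\Bigl(\frac{3\pi e}{4e_0}\Bigr)^{2/3}\bigl(1+O(e^{2/3})\bigr).
\]
Finally $A^3(3\pi/(4e_0))^2=\frac{8e_0^2}{\pi^2\lambda}\cdot\frac{9\pi^2}{16e_0^2}=\frac9{2\lambda}$, which yields the constant $(9/(2\lambda))^{1/3}$. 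The only delicate points are the justification of the $H^1$ membership across the cusp at $e=0$, where $u^{*\prime}$ blows up like $e^{-1/3}$ but remains in $L^2$, and keeping the $O$-terms uniform in the inversion. Both are routine.
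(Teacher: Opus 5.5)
Your proof is correct and follows the paper's argument essentially step for step. It uses the formula $\varepsilon'(\theta)=\frac{4e_0}{\pi}\sin^2\theta$, the chain rule for $u^{*\prime}=\kappa\cot\theta$ together with $\lambda\pi^2A^3=8e_0^2$, the substitution $e=\varepsilon(\theta)$ with monotone convergence for both integrals, $H^1$ membership by letting $\delta\downarrow0$ in $u^*(x)-u^*(\delta)=\int_\delta^x u^{*\prime}$, and the Taylor inversion at the cusp; the only cosmetic difference is that you identify the constant in (c) by cubing, $A^3(3\pi/(4e_0))^2=9/(2\lambda)$, rather than simplifying inside the $2/3$ power.
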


\begin{proof}
  The function $\varepsilon$ is smooth, with $\varepsilon(0)=0$,
  $\varepsilon(\pi/2)=e_0$ and
  \begin{equation}\label{eq:eps-prime}
    \varepsilon'(\theta)=\frac{2e_0}\pi\,(1-\cos2\theta)
    =\frac{4e_0}\pi\,\sin^{2}\theta>0\quad\hbox{for }0<\theta\le\frac\pi2\,;
  \end{equation}
  so it is a homeomorphism of $[0,\pi/2]$ onto $[0,e_0]$, and by the
  inverse function theorem its inverse is $C^{\infty}$ on $(0,e_0]$, where
  $\varepsilon'\ne0$.

  (a) follows at once, since $U$ is smooth, $U(0)=0$, $U(\pi/2)=A$ and
  $U$ is strictly increasing on $[0,\pi/2]$; by the chain rule and
  \eqref{eq:eps-prime},
  \begin{equation}\label{eq:u-prime}
    u^{*\prime}(e)=\frac{U'(\theta)}{\varepsilon'(\theta)}
    =\frac{2A\sin\theta\cos\theta}{(4e_0/\pi)\sin^{2}\theta}
    =\frac{\pi A}{2e_0}\cot\theta\,,
  \end{equation}
  and $\kappa^{2}=\pi^{2}A^{2}/(4e_0^{2})=2/(\lambda A)$ because
  $\lambda\pi^{2}A^{3}=8e_0^{2}$, which is the definition of $A$ cubed.

  (b) The integrals are computed by the substitution $e=\varepsilon(\theta)$,
  which is legitimate for nonnegative continuous integrands on $(0,e_0]$
  (monotone convergence on $[\delta,e_0]$, $\delta\downarrow0$). By
  \eqref{eq:eps-prime}, \eqref{eq:u-prime} and $\kappa^{2}=2/(\lambda A)$,
  \begin{gather*}
    \frac\lambda2\int_0^{e_0}\abs{u^{*\prime}}^{2}\,\dd e
    =\frac\lambda2\int_0^{\pi/2}\kappa^{2}\cot^{2}\theta\cdot\frac{4e_0}\pi\sin^{2}\theta\,\dd\theta
    =\frac{\lambda\kappa^{2}}2\cdot\frac{4e_0}\pi\cdot\frac\pi4=\frac{e_0}A\,,\\
    \int_0^{e_0}\frac{\dd e}{u^{*}}
    =\int_0^{\pi/2}\frac{(4e_0/\pi)\sin^{2}\theta}{A\sin^{2}\theta}\,\dd\theta
    =\frac{4e_0}{\pi A}\cdot\frac\pi2=\frac{2e_0}A\,.
  \end{gather*}
  The first computation shows that $u^{*\prime}$ is square integrable;
  as $u^{*}$ is continuous on $[0,e_0]$ and $C^{1}$ on $(0,e_0]$, it
  follows that $u^{*}\in\Hone$ with weak derivative $u^{*\prime}$
  (letting $\delta\downarrow0$ in $u^{*}(x)-u^{*}(\delta)=\int_\delta^xu^{*\prime}$
  gives $u^{*}(x)=\int_0^xu^{*\prime}$). With (a), $u^{*}\in X$; its zero
  set is $\{0\}$, and the second computation shows that $1/u^{*}$ is
  integrable. Adding the two, $F(u^{*})=3e_0/A$; since
  $A=\lambda^{-1/3}\cdot2(e_0/\pi)^{2/3}$, this is
  $\frac32\lambda^{1/3}\pi^{2/3}e_0^{1/3}$.

  (c) As $\theta\to0$, $\sin2\theta=2\theta-\frac43\theta^{3}+O(\theta^{5})$,
  so
  \[
    \varepsilon(\theta)=\frac{2e_0}\pi\Bigl(\theta-\frac12\sin2\theta\Bigr)
    =\frac{4e_0}{3\pi}\,\theta^{3}\bigl(1+O(\theta^{2})\bigr)\,,\qquad
    U(\theta)=A\theta^{2}\bigl(1+O(\theta^{2})\bigr)\,.
  \]
  Inverting the first relation, $\theta=(3\pi e/4e_0)^{1/3}(1+O(e^{2/3}))$,
  and substituting in the second,
  \begin{gather*}
    u^{*}(e)=A\Bigl(\frac{3\pi e}{4e_0}\Bigr)^{2/3}\bigl(1+O(e^{2/3})\bigr)\,,\\
    A\Bigl(\frac{3\pi}{4e_0}\Bigr)^{2/3}
    =\lambda^{-1/3}\Bigl(\frac{2\sqrt2\,e_0}\pi\cdot\frac{3\pi}{4e_0}\Bigr)^{2/3}
    =\lambda^{-1/3}\Bigl(\frac{3\sqrt2}2\Bigr)^{2/3}
    =\Bigl(\frac9{2\lambda}\Bigr)^{1/3}\,.
    \qedhere
  \end{gather*}
\end{proof}

\section{Proof of Theorem~\ref{thm:main}}\label{sec:proof}

We follow the classical route: existence by the direct method,
uniqueness by strict convexity, then the study of the minimizer---first
its monotonicity and positivity away from the origin, which make the
Euler--Lagrange equation available, then the integration of the
equation.

\subsection{Existence}\label{sub:existence}

The direct method requires the sublevel sets of $F$ to be compact for
some notion of convergence, and $F$ to be lower semicontinuous for the
same notion. The only point where this problem differs from a textbook
application is the potential term: a minimizing sequence could, in
principle, converge to a function vanishing on a set of positive
measure, where $F$ is $+\infty$ and Fatou's lemma would tell us nothing.
The following lemma says that the energy bound and the uniform
convergence together prevent it.

\begin{lemma}\label{lem:null-zero-set}
  Let $(u_n)\subset X$ be such that $\int_0^{e_0}\dd e/u_n\le C$ for every
  $n$ (in particular each $1/u_n$ is integrable, so that
  $\abs{\{u_n=0\}}=0$), and assume that $u_n\to u_\infty$ uniformly on
  $[0,e_0]$. Then the set $Z:=\{x\in[0,e_0]:u_\infty(x)=0\}$ has
  Lebesgue measure zero.
\end{lemma}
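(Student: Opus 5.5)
The plan is to apply Fatou's lemma to $1/u_n$ on the zero set $Z$, where the uniform convergence forces $1/u_n$ to blow up. Note first that $Z$ is closed, hence measurable, because $u_\infty$ is continuous as a uniform limit of continuous functions. Fix $x\in Z$. Then $u_n(x)\to u_\infty(x)=0$, so $1/u_n(x)\to+\infty$; here we use the convention $1/0=+\infty$ at the points where $u_n(x)=0$, which form a null set for each $n$ and therefore do not affect the integrals. Thus
\[
  \liminf_{n\to\infty}\frac1{u_n(x)}=+\infty\qquad\hbox{for every }x\in Z\,.
\]

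Next apply Fatou's lemma to the nonnegative measurable functions $\mathbf 1_Z/u_n$:
\[
  \int_Z\liminf_{n\to\infty}\frac1{u_n}\,\dd e\;\le\;\liminf_{n\to\infty}\int_Z\frac{\dd e}{u_n}\;\le\;\liminf_{n\to\infty}\int_0^{e_0}\frac{\dd e}{u_n}\;\le\;C\,.
\]
The integrand on the left equals $+\infty$ everywhere on $Z$, so the left-hand side is $(+\infty)\cdot\abs{Z}$. This is finite only if $\abs{Z}=0$, which is the claim.

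If one prefers to avoid infinite integrands, the same argument can be made quantitative. Given $\eta>0$, pick $N$ such that $\norm{u_n-u_\infty}_\infty<\eta$ for $n\ge N$. Then $u_n<\eta$ on $Z$, so $1/u_n>1/\eta$ almost everywhere on $Z$. Hence $\abs{Z}/\eta\le C$, that is $\abs{Z}\le C\eta$, and letting $\eta\downarrow0$ gives $\abs{Z}=0$. This version shows that only pointwise convergence on $Z$ is really needed, and it uses the bound on $\int 1/u_n$ with no lower semicontinuity argument.

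There is no genuine obstacle. The only points needing care are measurability of $Z$, which is settled by closedness, and the null sets $\{u_n=0\}$, which are irrelevant to the integrals.
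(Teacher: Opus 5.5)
Your argument is correct, and your second, quantitative version is exactly the paper's proof. There one fixes $M>0$ and chooses a single $n$ with $\max_{[0,e_0]}\abs{u_n-u_\infty}<1/M$. Then $1/u_n>M$ almost everywhere on $Z$, so $M\abs Z\le C$, which is your $\abs Z\le C\eta$ with $\eta=1/M$.

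Your primary route via Fatou's lemma is a different, if closely related, argument. It does not pick one well-chosen index; it uses the whole sequence and the pointwise divergence $1/u_n\to+\infty$ on $Z$ to get $(+\infty)\cdot\abs Z\le C$. This buys generality and economy. It needs only $u_n\to0$ pointwise on $Z$ and the measurability of $Z$. It also shows that the lemma is already contained in the Fatou inequality used in the lower-semicontinuity step of the existence proof. The left-hand side there is at least $\int_Z\liminf_n(1/u_n)\,\dd e=(+\infty)\cdot\abs Z$, so that inequality already forces $\abs Z=0$. The paper's version instead works with almost-everywhere finite integrands and a Chebyshev-type bound, avoiding the convention $1/0=+\infty$. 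The price is that it uses uniform convergence to make one index work on all of $Z$ at once.

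One small correction: your remark that ``this version shows that only pointwise convergence on $Z$ is really needed'' is attached to the wrong argument. The quantitative version picks $N$ through $\norm{u_n-u_\infty}_\infty$. It is the Fatou version (or the quantitative one combined with Egorov's theorem) that works under pointwise convergence alone.
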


\begin{proof}
  First of all $Z$ is measurable, since it is closed ($u_\infty$ is
  continuous). Let $\eps_n:=\max_{[0,e_0]}\abs{u_n-u_\infty}$, so that
  $\eps_n\to0$ by uniform convergence. Fix $M>0$ and choose one $n$ such
  that $\eps_n<1/M$. Then for every $x\in Z$
  \[
    u_n(x)\le u_\infty(x)+\eps_n=\eps_n<\frac1M\,.
  \]
  Since $\abs{\{u_n=0\}}=0$, at almost every point of $Z$ the function
  $1/u_n$ is defined and, by the previous estimate, $1/u_n>M$ there.
  Then we have
  \[
    C\;\ge\;\int_0^{e_0}\frac{\dd e}{u_n}\;\ge\;\int_Z\frac{\dd e}{u_n}
    \;\ge\;M\,\abs Z\,,
  \]
  that is $\abs Z\le C/M$. Since $M>0$ is arbitrary, $\abs Z=0$.
\end{proof}

\begin{proposition}\label{prop:existence}
  $\inf_XF<+\infty$, and there exists $u_\infty\in X$ such that
  $F(u_\infty)=\inf_XF$.
\end{proposition}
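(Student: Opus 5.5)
The finiteness of the infimum comes directly from Proposition~\ref{prop:cycloid}(b): $u^{*}\in X$ and $F(u^{*})=3e_0/A<+\infty$. So $m:=\inf_XF$ is a number in $[0,+\infty)$, since $F\ge0$, and we can choose a minimizing sequence $(u_n)\subset X$ with $F(u_n)\le m+1$ for all $n$.

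The next step is compactness. The bound gives $\frac\lambda2\norm{u_n'}_{L^2}^2\le m+1$. Because $u_n(0)=0$, we have $u_n(x)=\int_0^xu_n'$, and Cauchy--Schwarz yields $\abs{u_n(x)}\le\sqrt{e_0}\,\norm{u_n'}_{L^2}$. Hence $(u_n)$ is bounded in $\Hone$, and the same identity gives the uniform H\"older estimate $\abs{u_n(x)-u_n(y)}\le\norm{u_n'}_{L^2}\abs{x-y}^{1/2}$. By weak compactness in $\Hone$ together with Arzel\`a--Ascoli, after passing to a subsequence we have $u_n\rightharpoonup u_\infty$ weakly in $\Hone$ and $u_n\to u_\infty$ uniformly on $[0,e_0]$. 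The uniform limit inherits the pointwise constraints $u_\infty\ge0$ and $u_\infty(0)=0$. Therefore $u_\infty\in X$.

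The main obstacle is lower semicontinuity of the potential term, and this is exactly what Lemma~\ref{lem:null-zero-set} handles. Its hypothesis holds with $C=m+1$, because $\int_0^{e_0}\dd e/u_n\le F(u_n)\le m+1$. The lemma then gives $\abs{\{u_\infty=0\}}=0$. Off this null set, $u_\infty(x)>0$ and $u_n(x)\to u_\infty(x)$, so $1/u_n(x)\to1/u_\infty(x)$ almost everywhere. Fatou's lemma gives
\[
\int_0^{e_0}\frac{\dd e}{u_\infty}\le\liminf_n\int_0^{e_0}\frac{\dd e}{u_n}.
\]
For the gradient term, the norm is weakly lower semicontinuous, so $\int\abs{u_\infty'}^2\le\liminf_n\int\abs{u_n'}^2$. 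Adding the two inequalities, using $\liminf a_n+\liminf b_n\le\liminf(a_n+b_n)$, we get $F(u_\infty)\le\liminf_nF(u_n)=m$. In particular $1/u_\infty$ is integrable, so $F(u_\infty)$ is given by \eqref{eq:F} and is finite. Since $u_\infty\in X$, we also have $F(u_\infty)\ge m$, and equality follows.
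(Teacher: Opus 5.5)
Your proposal is correct and follows essentially the same route as the paper: a finite infimum from the cycloid competitor, then compactness via the equi-H\"older bound and Ascoli--Arzel\`a, then lower semicontinuity via Lemma~\ref{lem:null-zero-set} plus Fatou for the potential term and weak lower semicontinuity of the norm for the Dirichlet term. The only cosmetic difference is that you extract a weakly convergent subsequence in $H^1$, where the paper extracts one for the derivatives in $L^2$ and then identifies $u_\infty'$ by testing against $C_c^\infty$ functions. In your version you should add one line noting that the weak $H^1$ limit and the uniform limit coincide, which is immediate since both convergences imply weak convergence in $L^2$.
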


\begin{proof}
  \emph{The infimum is finite.} By Proposition~\ref{prop:cycloid}(b),
  $u^{*}$ is a competitor of finite energy, so
  $m:=\inf_XF\le F(u^{*})<+\infty$.

  \emph{A minimizing sequence.} Let $(u_n)\subset X$ with $F(u_n)\to m$;
  discarding finitely many terms, $F(u_n)\le C:=m+1$ for every $n$, that
  is, we are in a sublevel set of $F$. Since both terms of $F$ are
  nonnegative,
  \begin{equation}\label{eq:energy-bound}
    \frac\lambda2\int_0^{e_0}\abs{u_n'}^{2}\,\dd e\le C\,,\qquad
    \int_0^{e_0}\frac{\dd e}{u_n}\le C\,,
  \end{equation}
  and by \eqref{eq:finite-energy} each $1/u_n$ is defined almost
  everywhere.

  \emph{Compactness.} From \eqref{eq:energy-bound},
  $\norm{u_n'}_{L^{2}}\le\sqrt{2C/\lambda}$: the derivatives are bounded
  in the Hilbert space $\Ltwo$, so by the weak compactness of its balls
  \cite[Theorem~3.18]{brezis2011} there is a subsequence, not relabeled,
  with $u_n'\rightharpoonup v_\infty$ weakly in $\Ltwo$. In one dimension
  Sobolev functions are H\"older: by $u_n(x)-u_n(y)=\int_y^xu_n'$ and the
  Cauchy--Schwarz inequality,
  \[
    \abs{u_n(x)-u_n(y)}\le\norm{u_n'}_{L^{2}}\,\abs{x-y}^{1/2}
    \le\sqrt{2C/\lambda}\,\abs{x-y}^{1/2}\,,
  \]
  so the $u_n$ are equi-H\"older of exponent $\half$, in particular
  equicontinuous; and since $u_n(0)=0$,
  $\abs{u_n(x)}=\abs{u_n(x)-u_n(0)}\le\sqrt{2Ce_0/\lambda}$, so they are
  equibounded. These are exactly the hypotheses of the Ascoli--Arzel\`a
  theorem \cite[Theorem~4.25]{brezis2011}: up to a further subsequence,
  $u_n\to u_\infty$ uniformly on $[0,e_0]$. By the definition of weak
  derivative, for every $\psi\in C_c^{\infty}((0,e_0))$,
  \[
    \int_0^{e_0}u_n\psi'\,\dd e=-\int_0^{e_0}u_n'\psi\,\dd e\,,
  \]
  and passing to the limit---uniform convergence on the left, weak
  convergence on the right---we get $\int u_\infty\psi'=-\int v_\infty\psi$,
  i.e.\ $u_\infty\in\Hone$ with $u_\infty'=v_\infty$. Summing up, the
  sublevel sets of $F$ are compact with respect to the convergence
  \begin{equation}\label{eq:convergence}
    u_n\to u_\infty\ \hbox{uniformly on $[0,e_0]$},\qquad
    u_n'\rightharpoonup u_\infty'\ \hbox{weakly in }\Ltwo\,.
  \end{equation}
  Moreover $u_\infty(0)=\lim_nu_n(0)=0$, since uniform convergence implies
  pointwise convergence, and $u_\infty\ge0$ as a uniform limit of
  nonnegative functions: thus $u_\infty\in X$.

  \emph{Lower semicontinuity.} It remains to show that
  $F(u_\infty)\le\liminf_nF(u_n)$ along \eqref{eq:convergence}; we
  consider the two terms separately. For the Dirichlet term we simply
  use the weak lower semicontinuity of the norm in a Hilbert space:
  $\norm{u_\infty'}^{2}=\lim_n\langle u_n',u_\infty'\rangle
  \le\liminf_n\norm{u_n'}\,\norm{u_\infty'}$, whence
  \[
    \int_0^{e_0}\abs{u_\infty'}^{2}\,\dd e
    \le\liminf_n\int_0^{e_0}\abs{u_n'}^{2}\,\dd e\,.
  \]
  For the potential term, the functions $f_n=1/u_n$ are defined almost
  everywhere on $(0,e_0)$, measurable and nonnegative, so Fatou's lemma
  \cite[Lemma~4.1]{brezis2011} applies:
  \begin{equation}\label{eq:fatou}
    \int_0^{e_0}\liminf_{n\to\infty}\frac1{u_n}\,\dd e
    \le\liminf_{n\to\infty}\int_0^{e_0}\frac{\dd e}{u_n}\,.
  \end{equation}
  The left-hand side has to be compared with $\int\dd e/u_\infty$, and
  if $u_\infty$ vanished on a set of positive measure, \eqref{eq:fatou}
  would carry no information there. But this does not happen: by
  Lemma~\ref{lem:null-zero-set}, applied to the sequence $(u_n)$, which
  satisfies \eqref{eq:energy-bound} and converges uniformly, the set
  $Z=\{u_\infty=0\}$ has measure zero. At every $x\in(0,e_0)\setminus Z$
  we have $u_n(x)\to u_\infty(x)>0$, hence $1/u_n(x)\to1/u_\infty(x)$;
  therefore, the integrand being nonnegative,
  \[
    \int_0^{e_0}\liminf_{n\to\infty}\frac1{u_n}\,\dd e
    \ge\int_{(0,e_0)\setminus Z}\liminf_{n\to\infty}\frac1{u_n}\,\dd e
    =\int_0^{e_0}\frac{\dd e}{u_\infty}\,,
  \]
  where $1/u_\infty$ is defined almost everywhere because $\abs Z=0$.
  Together with \eqref{eq:fatou} this says that $1/u_\infty$ is
  integrable, with $\int\dd e/u_\infty\le\liminf_n\int\dd e/u_n$: the
  lower semicontinuity of the potential term.

  \emph{Conclusion.} Adding the two inequalities,
  $F(u_\infty)\le\liminf_nF(u_n)=m$; and $F(u_\infty)\ge m$ because
  $u_\infty\in X$. Hence $F(u_\infty)=m$.
\end{proof}

\subsection{Uniqueness}\label{sub:uniqueness}

The functional $F$ is strictly convex on $X$, hence the minimizer is
unique. In detail: $X$ is convex, being defined by linear conditions and
by the inequality $u\ge0$. Let $u\ne v$ in $X$ with $F(u),F(v)<+\infty$,
so that by \eqref{eq:finite-energy} both $1/u$ and $1/v$ are defined
almost everywhere, and let $w=\half(u+v)$; then
$\{w=0\}\subseteq\{u=0\}$ is a null set, and at almost every point of
$(0,e_0)$
\[
  \frac\lambda2\,\abs{w'}^{2}\le\frac12\Bigl(\frac\lambda2\abs{u'}^{2}
  +\frac\lambda2\abs{v'}^{2}\Bigr)\,,\qquad
  \frac1w\le\frac12\Bigl(\frac1u+\frac1v\Bigr)\,,
\]
by the convexity of $p\mapsto\frac\lambda2p^{2}$ on $\R$ and of
$s\mapsto1/s$ on $(0,+\infty)$. The second function is strictly convex,
so the second inequality is strict wherever $u(x)\ne v(x)$; and
$\{u\ne v\}$ is a nonempty open set, $u$ and $v$ being continuous, hence
of positive measure. Integrating, $F(w)<\half F(u)+\half F(v)$. If $u$
and $v$ were two minimizers, this would give $F(w)<\inf_XF$, which is
absurd. From now on $u_0$ denotes \emph{the\/} minimizer of $F$ on $X$.

\subsection{Monotonicity and positivity}\label{sub:positivity}

Nothing in the definition of $X$ prevents $u$ from vanishing at
interior points (an isolated zero with a $\abs{e-\bar e}^{2/3}$
touchdown even has finite energy). The minimizer does not: it is
nondecreasing, hence positive away from the origin, and this is what
makes the Euler--Lagrange equation available.

\begin{proposition}\label{prop:positive}
  The minimizer $u_0$ is nondecreasing on $[0,e_0]$ and $u_0>0$ on
  $(0,e_0]$. In particular, for every $a\in(0,e_0)$,
  $\delta(a):=\min_{[a,e_0]}u_0>0$.
\end{proposition}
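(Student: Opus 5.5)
The natural tool is a rearrangement or comparison argument: replace $u_0$ by its running maximum
\[
  \tilde u(e):=\max_{[0,e]}u_0\,,
\]
and show that $F(\tilde u)\le F(u_0)$, with strict inequality unless $\tilde u=u_0$. By the uniqueness established in Section~\ref{sub:uniqueness}, this forces $u_0=\tilde u$, which is nondecreasing. Positivity on $(0,e_0]$ then follows in two steps. A nondecreasing function in $X$ that vanished at some $\bar e>0$ would vanish on all of $[0,\bar e]$, a set of positive measure. That contradicts \eqref{eq:finite-energy}, since $F(u_0)\le F(u^{*})<+\infty$ by Proposition~\ref{prop:existence}. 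Monotonicity then also gives $\delta(a)=u_0(a)>0$.

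\emph{Admissibility of $\tilde u$.} The running maximum of a continuous function is continuous, nondecreasing and nonnegative, and $\tilde u(0)=u_0(0)=0$. To see that $\tilde u\in\Hone$, look at the set $\{\tilde u>u_0\}$. It is open, hence a countable union of disjoint intervals, and $\tilde u$ is constant on each of them: if $\tilde u$ increased at an interior point, the maximum would have to be attained there, i.e.\ $\tilde u=u_0$ at that point. On the complement, $\tilde u=u_0$. I would show $\tilde u$ is absolutely continuous with $\tilde u'=u_0'\,\mathbf 1_{\{\tilde u=u_0\}}$ a.e. One route is to write $\tilde u(x)=\int_0^x(u_0')^+\mathbf 1_{\{\tilde u=u_0\}}$ directly. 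A cleaner alternative is to note that $\tilde u$ is $\half$-H\"older with the same constant and is a.e.\ differentiable with $\abs{\tilde u'}\le\abs{u_0'}$. This step needs care and I expect it to be the main technical obstacle. The key fact is that on the coincidence set two $H^1$ functions that agree have equal derivatives a.e. (apply $(f-g)'=0$ a.e.\ on $\{f=g\}$ to $f=\tilde u$, $g=u_0$, once $\tilde u\in H^1$ is known). To get $\tilde u\in H^1$ in the first place, I would approximate by $\max(u_0,c)$-type truncations on each component interval, or use the interval decomposition directly: on each component $\tilde u$ is constant, and the sum of jumps is zero by continuity plus a Cantor-function exclusion.

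\emph{Comparison of energies.} Given $\abs{\tilde u'}\le\abs{u_0'}$ a.e., the Dirichlet term does not increase. Since $\tilde u\ge u_0$ pointwise, $1/\tilde u\le 1/u_0$ a.e., so the potential term does not increase either. Hence $F(\tilde u)\le F(u_0)=\inf_XF$, and $\tilde u$ is also a minimizer. Uniqueness then gives $\tilde u=u_0$ directly, so no strictness argument is needed at all. This is what makes the plan work: I only need $F(\tilde u)\le F(u_0)$ and $\tilde u\in X$, and uniqueness does the rest.

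If the absolute-continuity step proves awkward, a fallback is to argue locally. Suppose $u_0$ is not nondecreasing, so $u_0(x_1)>u_0(x_2)$ for some $x_1<x_2$. Choose a level $c$ strictly between these values and let $I$ be the connected component of $\{u_0<c\}$ containing $x_2$, so that $I\subset(x_1,e_0]$ and $u_0=c$ at its left endpoint. Replace $u_0$ by $\max(u_0,c)$ on $I$ only. This is a standard $H^1$ truncation, and it gives a competitor in $X$ with no larger energy (strictly smaller potential) that differs from $u_0$, contradicting uniqueness. At the right endpoint the component may reach $e_0$, but there is no boundary condition there, so the truncation remains admissible. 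This version avoids the running-maximum regularity issue entirely, and I would likely write the proof this way.
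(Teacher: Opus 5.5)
Your fallback argument is essentially the paper's proof, and it is correct. The paper takes the level $\delta=u_0(x_1)$ itself rather than a level strictly between $u_0(x_2)$ and $u_0(x_1)$. It replaces $u_0$ by the constant $\delta$ on the component of $\{u_0<\delta\}$ containing $x_2$, whose left endpoint is $\ge x_1>0$, and contradicts \emph{minimality} directly: the Dirichlet term does not increase and the potential term strictly decreases, so uniqueness is not needed.

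Your primary route through the running maximum $\tilde u$ is genuinely different and also correct. It is global, with no component to select, and shows in one stroke that every finite-energy element of $X$ can be replaced by a nondecreasing one of no larger energy. The price is a regularity lemma that the paper's surgery avoids, since there the modified function is piecewise either $u_0$ or a constant. That lemma is easier than you expect. For $x<y$, if $\tilde u(y)>\tilde u(x)$, then $\tilde u(y)=u_0(z)$ for some $z\in[x,y]$, while $\tilde u(x)\ge u_0(x)$; so
\[
  0\le\tilde u(y)-\tilde u(x)\le\int_x^y\abs{u_0'}\,\dd e\,.
\]
This gives the absolute continuity of $\tilde u$ at once, and, by Lebesgue differentiation, $0\le\tilde u'\le\abs{u_0'}$ a.e.

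The ``cleaner alternative'' you mention would not suffice on its own. $\tfrac12$-H\"older continuity plus an a.e.\ bound on the derivative does not exclude a Cantor part in a monotone function; the integral bound above is what excludes it.

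Uniqueness can also be dispensed with in the running-maximum route. If $\tilde u\ne u_0$, the nonempty open set $\{\tilde u>u_0\}$ already makes the potential term strictly smaller.
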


\begin{proof}
  Suppose that $u_0$ is not nondecreasing: there are $x_1<x_2$ in
  $[0,e_0]$ with $u_0(x_2)<u_0(x_1)=:\delta$. Then $\delta>0$, so
  $x_1>0$. We replace $u_0$ by the constant $\delta$ on an interval
  around $x_2$; for the modified function to be continuous, hence in
  $\Hone$, the interval must be chosen so that $u_0=\delta$ at its
  endpoints. Let $(\alpha,\beta)$ be the connected component of the
  relatively open set $\{x\in[0,e_0]:u_0(x)<\delta\}$ that contains
  $x_2$ (with $\beta=e_0$ included if the component reaches $e_0$).
  Since $x_1<x_2$ does not belong to this set, $\alpha\ge x_1>0$, and by
  continuity
  \[
    u_0(\alpha)=\delta\,,\qquad u_0(\beta)=\delta\ \hbox{ if }\beta<e_0\,,
    \qquad u_0<\delta\ \hbox{ on }(\alpha,\beta)\,.
  \]
  Define $v:=u_0$ on $[0,\alpha]\cup[\beta,e_0]$ and $v:=\delta$ on
  $[\alpha,\beta]$. Then $v$ is continuous, and $v\in\Hone$ with
  $v'=u_0'$ outside $[\alpha,\beta]$ and $v'=0$ inside, because $v$ is
  absolutely continuous on each of the three pieces (integrating by
  parts on each piece against a test function, the boundary terms cancel
  by the continuity of $v$). Moreover $v\ge0$ and $v(0)=u_0(0)=0$,
  because $\alpha>0$; so $v\in X$, and $F(v)<+\infty$. Since $v=u_0$
  outside $(\alpha,\beta)$,
  \[
    F(u_0)-F(v)=\int_\alpha^\beta\frac\lambda2\abs{u_0'}^{2}\,\dd e
    +\int_\alpha^\beta\Bigl(\frac1{u_0}-\frac1\delta\Bigr)\dd e\;>\;0\,:
  \]
  the first integral is $\ge0$, and the integrand of the second is
  positive almost everywhere on $(\alpha,\beta)$, where $0<u_0<\delta$
  almost everywhere by \eqref{eq:finite-energy}. This contradicts the
  minimality of $u_0$; so $u_0$ is nondecreasing. If now $u_0(x_0)=0$
  for some $x_0\in(0,e_0]$, then $u_0\equiv0$ on $[0,x_0]$, a set of
  positive measure, against \eqref{eq:finite-energy}; so $u_0>0$ on
  $(0,e_0]$, and its minimum on the compact interval $[a,e_0]$ is
  positive.
\end{proof}

\subsection{The Euler--Lagrange equation}\label{sub:EL}

\begin{proposition}\label{prop:EL}
  The minimizer satisfies $u_0\in C^{\infty}((0,e_0])$,
  \[
    \lambda\,u_0''=-\,\frac1{u_0^{2}}\ \hbox{ on }(0,e_0]\,,\qquad u_0'(e_0)=0\,.
  \]
\end{proposition}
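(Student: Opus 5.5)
The plan is to derive the weak Euler--Lagrange equation from variations supported away from the origin, where Proposition~\ref{prop:positive} gives the positive lower bound $\delta(a)$, and then to bootstrap regularity in one dimension.

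\emph{First variation.} Fix $a\in(0,e_0)$ and take $\varphi\in\Hone$ with $\varphi=0$ on $[0,a]$; no condition is imposed at $e_0$. For $\abs t<\delta(a)/(2\norm{\varphi}_\infty)$ the function $u_0+t\varphi$ lies in $X$. It equals $u_0$ on $[0,a]$ and is $\ge\delta(a)/2$ on $[a,e_0]$, so its energy is finite. On $[a,e_0]$ the integrand $1/(u_0+t\varphi)$ is smooth in $t$, and its $t$-derivative $-\varphi/(u_0+t\varphi)^{2}$ is bounded by $4\norm{\varphi}_\infty/\delta(a)^{2}$. Dominated convergence therefore allows differentiation under the integral. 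Minimality gives
\[
  \int_0^{e_0}\Bigl(\lambda u_0'\varphi'-\frac{\varphi}{u_0^{2}}\Bigr)\dd e=0\,.
\]

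\emph{Regularity.} Taking first $\varphi\in C_c^\infty((a,e_0))$, the identity says that the weak derivative of $\lambda u_0'$ on $(a,e_0)$ is $-1/u_0^{2}$. This function is continuous and bounded there, since $u_0$ is continuous and $\ge\delta(a)$. Hence $u_0'$ coincides a.e.\ with $c-\lambda^{-1}\int_a^e u_0^{-2}$, which is $C^1$ on $[a,e_0]$, so $u_0\in C^2([a,e_0])$ and $\lambda u_0''=-1/u_0^{2}$ classically. Bootstrapping, if $u_0\in C^k$ then the right-hand side is $C^k$ (composition with $s\mapsto s^{-2}$ on $s>0$), so $u_0\in C^{k+2}$. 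Since $a$ is arbitrary, $u_0\in C^\infty((0,e_0])$, with smoothness up to $e_0$ one-sidedly.

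\emph{Natural boundary condition.} Now take $\varphi$ smooth, vanishing on $[0,a]$, with $\varphi(e_0)=1$. Integrating by parts on $[a,e_0]$ is legitimate because $u_0\in C^2([a,e_0])$. Using the equation, the identity reduces to $\lambda u_0'(e_0)\varphi(e_0)=0$, that is $u_0'(e_0)=0$.

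I expect the main obstacle to be the justification of the variation: admissibility in $X$ and differentiability of the potential term. Both are handled by restricting to $\varphi$ vanishing near $0$ and using $\delta(a)>0$ from Proposition~\ref{prop:positive}. The rest is routine one-dimensional elliptic regularity.
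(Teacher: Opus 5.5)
Your proposal is correct and follows essentially the same route as the paper. You use variations vanishing on $[0,a]$, made admissible by the bound $\delta(a)$ of Proposition~\ref{prop:positive}, then the weak Euler--Lagrange identity, the one-dimensional bootstrap to $C^{\infty}((0,e_0])$, and the natural boundary condition from a test function with $\varphi(e_0)=1$, differing only cosmetically in allowing $H^{1}$ test functions (the Dirichlet term is then just quadratic in $t$) and in localizing to $C_c^\infty((a,e_0))$ instead of $C_c^\infty((0,e_0))$.
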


\begin{proof}
  Let $\psi\in C^{\infty}([0,e_0])$ vanish on $[0,a]$ for some
  $a\in(0,e_0)$. By Proposition~\ref{prop:positive}, $u_0\ge\delta(a)>0$
  on $[a,e_0]$, so for $\abs t$ small $u_0+t\psi\ge\delta(a)/2$ on
  $[a,e_0]$, while $u_0+t\psi=u_0$ on $[0,a]$: hence $u_0+t\psi\in X$,
  $F(u_0+t\psi)<+\infty$, and $t\mapsto F(u_0+t\psi)$ is differentiable
  at $t=0$, the derivative being computed under the integral sign (the
  $t$-derivative of the integrand, $\lambda(u_0'+t\psi')\psi'
  -\psi/(u_0+t\psi)^{2}$, is bounded by an integrable function for $\abs t$
  small). Since $u_0$ is a minimizer, this derivative vanishes:
  \begin{equation}\label{eq:weak-EL}
    \int_0^{e_0}\Bigl(\lambda u_0'\psi'-\frac{\psi}{u_0^{2}}\Bigr)\dd e=0\,.
  \end{equation}
  Taking $\psi\in C_c^{\infty}((0,e_0))$, \eqref{eq:weak-EL} says that
  $\lambda u_0'$ is weakly differentiable on $(0,e_0)$ with weak
  derivative $-1/u_0^{2}$, a function which is continuous on $(0,e_0]$ by
  Proposition~\ref{prop:positive}. Hence $u_0'\in H^{1}((a,e_0))$ for every
  $a>0$, so $u_0'$ is continuous on $(0,e_0]$
  \cite[Theorem~8.2]{brezis2011} and $u_0\in C^{1}((0,e_0])$; then
  $u_0'$, being the integral of a continuous function, is $C^{1}$, so
  $u_0\in C^{2}((0,e_0])$ and $\lambda u_0''=-1/u_0^{2}$ there. The
  equation then bootstraps: its right-hand side is $C^{2}$, hence
  $u_0\in C^{4}$, and so on; $u_0\in C^{\infty}((0,e_0])$. Finally, take
  $\psi$ vanishing on $[0,a]$ with $\psi(e_0)=1$ and integrate
  \eqref{eq:weak-EL} by parts on $[a,e_0]$, where $u_0$ is $C^{2}$: the
  integral terms cancel against the equation, and what remains is
  $\lambda u_0'(e_0)\psi(e_0)=0$, i.e.\ $u_0'(e_0)=0$.
\end{proof}

\subsection{Identification with the cycloid, and end of the proof}
\label{sub:identification}

We can now carry out, for the minimizer, the computation announced in
Section~\ref{sec:cycloid}. Put $A:=u_0(e_0)$, which is positive by
Proposition~\ref{prop:positive}. Multiplying $\lambda u_0''=-1/u_0^{2}$
by $u_0'$ on $(0,e_0]$ we get
$\bigl(\frac\lambda2u_0'^{2}-1/u_0\bigr)'=0$, so
$\frac\lambda2u_0'^{2}-1/u_0$ is constant on $(0,e_0]$, equal to its
value at $e_0$, which is $-1/A$ by Proposition~\ref{prop:EL}:
\begin{equation}\label{eq:first-integral-u0}
  \frac\lambda2\,u_0'^{2}=\frac1{u_0}-\frac1A\quad\hbox{on }(0,e_0]\,;
\end{equation}
in particular $u_0\le A$. Since $u_0''<0$ on $(0,e_0]$, the derivative
$u_0'$ is strictly decreasing there, and $u_0'(e_0)=0$ gives $u_0'>0$ on
$(0,e_0)$; so $u_0$ is strictly increasing on $[0,e_0]$ (at $0$ by
continuity), a homeomorphism of $[0,e_0]$ onto $[0,A]$, and
\eqref{eq:first-integral-u0} with the positive root reads
$u_0'=\sqrt{2(A-u_0)/(\lambda Au_0)}$. Let $g\colon[0,A]\to[0,e_0]$ be the
inverse of $u_0$: it is continuous and strictly increasing, $g(0)=0$,
$g(A)=e_0$, and by the inverse function theorem it is $C^{1}$ on $(0,A)$
with
\[
  g'(u)=\frac1{u_0'(g(u))}=\sqrt{\frac{\lambda Au}{2(A-u)}}\,,\qquad 0<u<A\,.
\]
The right-hand side is positive and integrable on $(0,A)$ (it is
bounded near $0$ and behaves like $(A-u)^{-1/2}$ near $A$); so from
$g(u)-g(u_1)=\int_{u_1}^ug'$ for $0<u_1<u<A$, letting $u_1\downarrow0$
and then $u\uparrow A$ and using the continuity of $g$ at $0$ and $A$,
\begin{equation}\label{eq:g}
  g(u)=\int_0^u\sqrt{\frac{\lambda As}{2(A-s)}}\,\dd s\quad\hbox{for }0\le u\le A\,,
  \qquad\hbox{in particular}\quad e_0=g(A)\,.
\end{equation}
The integral is computed by the substitution $s=A\sin^{2}\phi$,
$0\le\phi\le\pi/2$, for which $\dd s=2A\sin\phi\cos\phi\,\dd\phi$ and
$\sqrt{\lambda As/(2(A-s))}=\sqrt{\lambda A/2}\,\tan\phi$: for
$0\le\theta\le\pi/2$,
\[
  g(A\sin^{2}\theta)=\sqrt{2\lambda}\,A^{3/2}\int_0^\theta\sin^{2}\phi\,\dd\phi
  =\sqrt{\frac\lambda2}\,A^{3/2}\,(\theta-\sin\theta\cos\theta)\,.
\]
At $\theta=\pi/2$ this is \eqref{eq:g}:
$e_0=\sqrt{\lambda/2}\,A^{3/2}\,\pi/2$, that is
$A^{3/2}=2\sqrt2\,e_0/(\pi\sqrt\lambda)$ and
$A=\lambda^{-1/3}(2\sqrt2\,e_0/\pi)^{2/3}$, the constant of
\eqref{eq:cycloid}; with this value $\sqrt{\lambda/2}\,A^{3/2}=2e_0/\pi$
and the last display reads $g(U(\theta))=\varepsilon(\theta)$ in the
notation of Proposition~\ref{prop:cycloid}, i.e.\
$u_0(\varepsilon(\theta))=U(\theta)$ for every $\theta\in[0,\pi/2]$.
Hence $u_0=U\circ\varepsilon^{-1}=u^{*}$.

\begin{proof}[Proof of Theorem~\ref{thm:main}]
  By Proposition~\ref{prop:existence} and Section~\ref{sub:uniqueness},
  $F$ has exactly one minimizer $u_0$ on $X$, and we have just shown that
  $u_0$ is the function $u^{*}$ defined by \eqref{eq:cycloid}. Its
  continuity, strict monotonicity and smoothness on $(0,e_0]$ are in
  Proposition~\ref{prop:cycloid}(a); equation \eqref{eq:EL} is
  Proposition~\ref{prop:EL} together with $u^{*}(0)=0$.
\end{proof}

\begin{proof}[Proof of Corollaries~\ref{cor:cusp} and~\ref{cor:values}]
  Proposition~\ref{prop:cycloid}(c) and (b), respectively; in (b),
  $T(u^{*})=\int\dd e/u^{*}=2e_0/A=\lambda^{1/3}(\pi^{2}e_0)^{1/3}$
  because $A=\lambda^{-1/3}\cdot2(e_0/\pi)^{2/3}$, and
  $\frac\lambda2\int\abs{u^{*\prime}}^{2}=e_0/A=\half T(u^{*})$.
\end{proof}

\begin{remark}[the clock along the minimizer]\label{rem:clock}
  Along the risk dynamics $\dot e=-u^{*}(e)$, $e(0)=e_0$, the parameter
  $\theta$ of \eqref{eq:cycloid} decreases linearly in time. Indeed
  $\dd t=-\dd e/u^{*}(e)$ and, by \eqref{eq:eps-prime},
  $\dd t=-\varepsilon'(\theta)\,\dd\theta/U(\theta)=-(4e_0/\pi A)\,\dd\theta$:
  the factor $\sin^{2}\theta$ cancels, and $\theta$ runs uniformly from
  $\pi/2$ to $0$ in the time $(\pi/2)(4e_0/\pi A)=2e_0/A=T(u^{*})$. Thus
  $e(t)=\varepsilon\bigl(\tfrac\pi2(1-t/T(u^{*}))\bigr)$ in closed form,
  and the risk reaches zero at $t=T(u^{*})$ with $\dot e=0$. Since
  $\sin^{2}\theta$ has mean $\half$ over $[0,\pi/2]$, the cycloid takes
  exactly twice the time of the constant schedule running at its own
  peak rate $A$: this factor $2$ is the price of the landing condition.
\end{remark}

\section{Consistency with the H\"older regime}\label{sec:holder}

The paper \cite{betti2026holder} classifies the power-law schedules
$u(e)=\eta_0e^{\alpha}$ by what they do to the weights near a
nondegenerate minimum $w^{*}$ of $E$, where $\abs{\nabla E}^{2}\asymp e$.
Along \eqref{eq:dynamics} the speed of the weights is
$\abs{\dot w}=u(e)/\abs{\nabla E}\asymp e^{\alpha-1/2}$, and in terms of
$r=\abs{w-w^{*}}$ the terminal dynamics is $\dot r\asymp-r^{2\alpha-1}$.
Four regimes result: for $\alpha<\half$ the speed blows up at arrival
(singular regime); for $\alpha=\half$ it stays bounded and positive
(critical regime); for $\half<\alpha<1$ the arrival is in finite time
with vanishing speed, the terminal vector field being H\"older
continuous of exponent $2\alpha-1$ but not Lipschitz (\emph{H\"older
regime}); for $\alpha\ge1$ the field is Lipschitz and the arrival is only
asymptotic. Note that on the risk itself every $\alpha\in(0,1)$ lands
tangentially, $e(t)\propto(t^{*}-t)^{1/(1-\alpha)}$; the threshold $\half$
is invisible in the risk and is created by the flat bottom of the bowl,
$e\asymp r^{2}$, which halves the exponent.

Theorem~\ref{thm:main} is consistent with this picture in three
respects. First, the minimizer is not a power law, but at the cusp
$u^{*}(e)\sim(9/(2\lambda))^{1/3}e^{2/3}$ (Corollary~\ref{cor:cusp}), and
$2/3$ is the exponent obtained in \cite{betti2026holder} by substituting
the ansatz $\eta_0e^{\alpha}$ in the Euler--Lagrange equation
(Remark~\ref{rem:power-law}). The variational problem thus selects,
among all schedules, one whose terminal behaviour is that of
$\alpha=2/3\in(\half,1)$: arrival in the finite time
$\lambda^{1/3}(\pi^{2}e_0)^{1/3}$ with weight speed
$\abs{\dot w}\asymp e^{1/6}\to0$. The terminal field in $r$ has H\"older
exponent $2\alpha-1=1/3$, which happens to be the exponent of Zak's
canonical terminal attractor $\dot x=-x^{1/3}$ \cite{zak1988}. Second,
the finite-energy condition alone already excludes the other regimes for
power laws (Remark~\ref{rem:powers}): the Dirichlet term excludes
$\alpha\le\half$ and the time term excludes $\alpha\ge1$. The H\"older
window is not a consequence of optimality; it is the set of power-law
schedules the functional can see at all, and optimality picks a point
inside it. Third, the selection is robust to the weights: $\lambda$ only
rescales the rate by $\lambda^{-1/3}$ and the time by $\lambda^{1/3}$,
leaving the cusp exponent untouched (Remark~\ref{rem:scaling}).

\begin{remark}[power laws]\label{rem:powers}
  Let $v_\alpha(e)=e^{\alpha}$ with $\alpha>0$. On $(0,e_0)$ it is smooth
  with $v_\alpha'=\alpha e^{\alpha-1}$, and this derivative is square
  integrable exactly when $2\alpha-2>-1$; so $v_\alpha\in X$ if and only
  if $\alpha>\half$. Its zero set is $\{0\}$, and
  $\int_0^{e_0}\dd e/v_\alpha=\int_0^{e_0}e^{-\alpha}\,\dd e$ is finite
  exactly when $\alpha<1$. Hence, whatever $\lambda$,
  \begin{equation}\label{eq:window}
    F(v_\alpha)<+\infty\iff\tfrac12<\alpha<1\,.
  \end{equation}
  For $\lambda=1$ and $\alpha=2/3$,
  $F(v_{2/3})=\int_0^{e_0}\bigl(\frac12\cdot\frac49e^{-2/3}+e^{-2/3}\bigr)\dd e
  =\frac{11}3\,e_0^{1/3}$; the value of the minimizer is
  $\frac32\pi^{2/3}e_0^{1/3}\approx3.218\,e_0^{1/3}$, smaller, as it must
  be. The linear profile $v_1(e)=e$, the schedule of Polyak's step, has
  $F(v_1)=+\infty$: it takes infinite time. Constants $u\equiv c$ are
  not in $X$ at all, since $u(0)\ne0$.
\end{remark}

\begin{remark}[the pure power law]\label{rem:power-law}
  The function $\bar v(e)=(9/(2\lambda))^{1/3}e^{2/3}$ is an exact
  solution of $\lambda u''=-1/u^{2}$ on $(0,+\infty)$: indeed
  $\lambda(ce^{2/3})''=-\frac29\lambda ce^{-4/3}$ equals $-c^{-2}e^{-4/3}$
  exactly when $c^{3}=9/(2\lambda)$. It is the solution found in
  \cite{betti2026holder} by the ansatz $u=\eta_0e^{\alpha}$. But it does
  not satisfy $u'(e_0)=0$, so it is not the minimizer on a finite
  interval: the minimizer follows it at the cusp and bends over to reach
  its maximum at $e_0$. One can check that $\bar v$ arrives sooner,
  $T(\bar v)=6^{1/3}\lambda^{1/3}e_0^{1/3}\approx1.817\,\lambda^{1/3}e_0^{1/3}$
  against $T(u^{*})\approx2.145\,\lambda^{1/3}e_0^{1/3}$, but pays more
  in the penalty: $F(\bar v)=2\cdot6^{1/3}\lambda^{1/3}e_0^{1/3}
  \approx3.634\,\lambda^{1/3}e_0^{1/3}$, against
  $F(u^{*})\approx3.218\,\lambda^{1/3}e_0^{1/3}$. The same equation
  $u''=-c/u^{2}$ governs, in one dimension, the Alt--Phillips functional
  with negative exponent studied by De Silva and Savin
  \cite{desilva2022}, whose potential $u^{-1}\chi_{\{u>0\}}$ charges
  nothing on the zero set; there dead zones are free, here they cost
  infinitely much, and the two problems are genuinely different.
\end{remark}

\begin{remark}[scaling]\label{rem:scaling}
  For $\lambda_1,\lambda_2>0$ the functional
  $\int_0^{e_0}(\lambda_1\abs{u'}^{2}/2+\lambda_2/u)\,\dd e$ of
  \cite{betti2026holder}, without the amplitude term, is $\lambda_2F$
  with $\lambda=\lambda_1/\lambda_2$: a single parameter suffices, and it
  can be scaled away as well. Put $b=\sqrt\lambda$ and $v(s)=u(bs)$ for
  $0<s<e_0/b$; then $u'(e)=v'(e/b)/b$, and the substitution $e=bs$ gives
  \[
    F(u)=\frac\lambda{b}\int_0^{e_0/b}\frac12\abs{v'}^{2}\,\dd s
    +b\int_0^{e_0/b}\frac{\dd s}{v}
    =\sqrt\lambda\;F_1(v)\,,
  \]
  where $F_1$ is the functional \eqref{eq:F} with $\lambda=1$ on the
  interval $(0,e_0/b)$. Minimizers correspond, and this is consistent
  with Theorem~\ref{thm:main}: the cycloid for $(\lambda,e_0)$ is
  obtained from the cycloid for $(1,e_0/\sqrt\lambda)$ by the change of
  variable $e\mapsto e/\sqrt\lambda$, which leaves the parametrization
  $e(\theta)$ in \eqref{eq:cycloid} unchanged and divides $u^{*}$ by
  $\lambda^{1/3}$. The amplitude term $\lambda_3\int u^{p}/p$ of
  \cite{betti2026holder} is not covered by Theorem~\ref{thm:main}: the
  existence argument of Section~\ref{sub:existence} goes through
  unchanged, since that term is nonnegative and passes to the limit
  under uniform convergence, but the identification step has no closed
  form, and what the amplitude term does to the cusp exponent we leave
  open.
\end{remark}

\begin{remark}
  The two claims of Theorem~\ref{thm:main} that matter for the
  algorithm are the finite arrival time and the vanishing rate at
  arrival, $u^{*}(0)=0$ with $u^{*}\sim ce^{2/3}$. The first says that
  the schedule is a terminal attractor for the risk; the second, combined
  with the classification above, says that it lands the weights softly.
  What is not proved here, and not claimed, is anything about the
  discretization of \eqref{eq:dynamics} along $u^{*}$: the minimum-time
  property is a statement in the time of the equation, not in the number
  of gradient evaluations, and the cost of following $u^{*}$ numerically
  depends on the curvature of $E$ near the minimum. This is the subject
  of separate work.
\end{remark}

\section*{Acknowledgments}
The authors thank Giovanni Bellettini and Genni Fragnelli for helpful
discussions that clarified fundamental aspects of the proof of the
existence and uniqueness theorem. The authors used an AI assistant
(Claude, Anthropic) for help with the writing and typesetting of this
note; all mathematical content was checked by the authors, who take full
responsibility for it.

\bibliographystyle{amsplain}
\bibliography{biblio}

\end{document}